\documentclass{article}

\usepackage{microtype}
\usepackage{graphicx}
\usepackage{subcaption}
\usepackage{booktabs} %
\usepackage{multirow}
\usepackage{enumitem}
\usepackage{minitoc}

\setlist[itemize]{leftmargin=0.5cm, itemsep=1pt, parsep=0pt}

\usepackage{hyperref}
\usepackage{tcolorbox}

\usepackage[preprint]{icml2026}

\usepackage{amsmath}
\usepackage{amssymb}
\usepackage{mathtools}
\usepackage{amsthm}

\usepackage{colortbl}
\definecolor{mygray}{gray}{.9}

\usepackage[capitalize,noabbrev]{cleveref}

\usepackage{xcolor}
\definecolor{mydarkred}{rgb}{0.6,0,0}
\definecolor{mydarkgreen}{rgb}{0,0.6,0}

\hypersetup{
    colorlinks=true,
    linkcolor=mydarkred,
    citecolor=mydarkgreen
}

\theoremstyle{plain}
\newtheorem{theorem}{Theorem}[section]
\newtheorem{proposition}[theorem]{Proposition}

\theoremstyle{definition}
\newtheorem{definition}[theorem]{Definition}

\theoremstyle{remark}

\usepackage[textsize=tiny]{todonotes}

\newcommand{\methodname}{RLPT}

\icmltitlerunning{{Reinforcement Learning with Promising Tokens for Large Language Models}}

\begin{document}

\doparttoc
\faketableofcontents

\twocolumn[
  \icmltitle{Reinforcement Learning with Promising Tokens for Large Language Models}

  \icmlsetsymbol{equal}{*}

  \begin{icmlauthorlist}
    {\large A} {\large P}\normalsize REPRINT \\ \vspace{1em}
    \icmlauthor{Jing-Cheng Pang}{huawei}
    \icmlauthor{Liang Lu}{huawei}
    \icmlauthor{Xian Tang}{huawei}
    \icmlauthor{Kun Jiang}{huawei}
    \icmlauthor{Sijie Wu}{huawei}
    \icmlauthor{Kai Zhang}{huawei}
    \icmlauthor{Xubin Li}{huawei}
  \end{icmlauthorlist}
  \icmlaffiliation{huawei}{Huawei Technologies Co., Ltd.}
  \icmlcorrespondingauthor{Jing-Cheng Pang}{pangjcheng@gmail.com}

  \vskip 0.3in
]

\printAffiliationsAndNotice{}  %

\begin{abstract}
Reinforcement learning (RL) has emerged as a key paradigm for aligning and optimizing large language models (LLMs). Standard approaches treat the LLM as the policy and apply RL directly over the full vocabulary space. However, this formulation includes the massive tail of contextually irrelevant tokens in the action space, which could distract the policy from focusing on decision-making among the truly reasonable tokens. In this work, we verify that valid reasoning paths could inherently concentrate within a low-rank subspace. Based on this insight, we introduce Reinforcement Learning with Promising Tokens (\methodname), a framework that mitigates the action space issue by decoupling strategic decision-making from token generation. Specifically, \methodname~leverages the semantic priors of the base model to identify a dynamic set of \emph{promising tokens} and constrains policy optimization exclusively to this refined subset via masking. Theoretical analysis and empirical results demonstrate that \methodname~effectively reduces gradient variance, stabilizes the training process, and improves sample efficiency. Experiment results on math, coding, and telecom reasoning show that \methodname~outperforms standard RL baselines and integrates effectively across various model sizes (4B and 8B) and RL algorithms (GRPO and DAPO).
\end{abstract}

\section{Introduction}
\label{sec:intro}

Developing large language models (LLMs) capable of learning and improving autonomously is a hallmark of machine intelligence. Reinforcement Learning (RL) has emerged as the primary paradigm for this objective, enabling LLMs to improve by optimizing against explicit reward signals that measure the quality of the model's response \citep{rlc,rule_reward}. By formulating the LLM as a policy, RL enables the maximization of long-term rewards, clearly enhancing performance in tasks requiring complex reasoning, such as math and coding \citep{grpo}.

\begin{figure}[t]
 \centering
 \includegraphics[width=\linewidth]{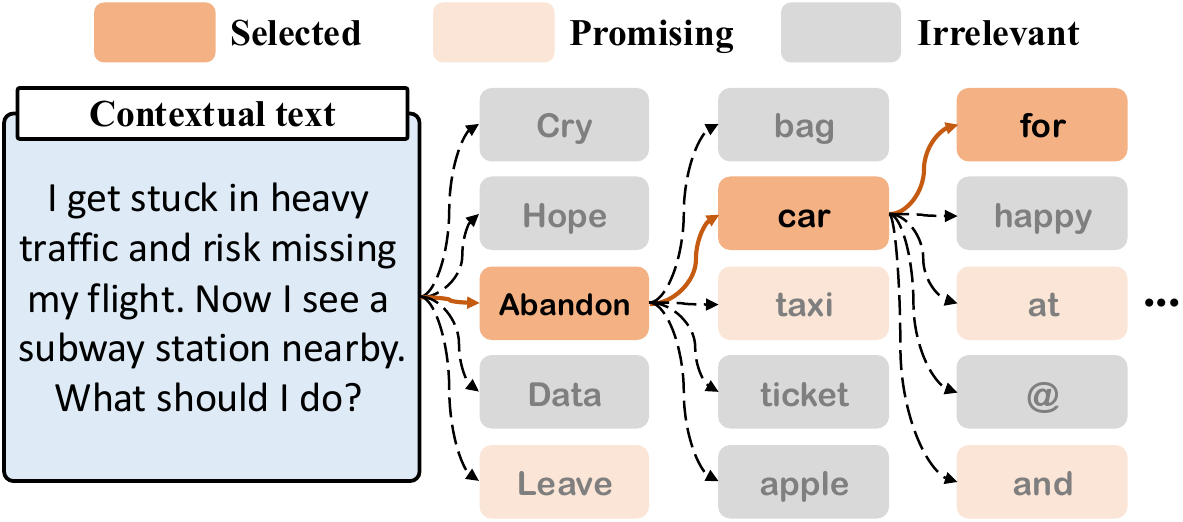}
 \caption{An illustration of decision making with promising tokens. At each step, the policy selects tokens solely from a high-likelihood subset, enabling it to focus on strategic decision-making.}
 \vspace{-2em}
 \label{fig:illustration}
\end{figure}

However, directly applying standard RL algorithms (e.g., PPO \citep{ppo}, REINFORCE \citep{reinforce}) to LLMs presents a unique challenge regarding the high dimensionality of the action space \citep{llm_need_small_actionspace}. Unlike traditional RL tasks with compact action spaces (e.g., Atari, Robot control  \citep{atari,mujoco}), the action space of an LLM corresponds to its full vocabulary, typically exceeding 50,000 tokens.
In practice, previous works have attempted to alleviate this issue using inference strategies such as Top-$k$ \citep{topk} or Nucleus sampling \citep{topp}, which prune the token space while maintaining text coherence \textit{during the rollout phase} \citep{verl}. Yet, the policy optimization step still operates over the vocabulary size, creating a off-policy mismatch between efficient rollout and high-dimensional policy optimization. Consequently, the RL signal is distracted by the need to maintain basic syntactic coherence rather than focusing solely on the logical reasoning required for improvement.

To bridge this gap, we suggest that RL for LLM policies should operate within a more focused decision space, as illustrated in Fig. \ref{fig:illustration}. We begin by investigating the distribution of successful trajectories and empirically verify that the tokens required for correct solutions are naturally concentrated within a small subspace of high-likelihood candidates. Building on this insight, we propose Reinforcement Learning with Promising Tokens (\methodname) method, which \textit{decouples logical decision-making from token generation}. 
The core intuition of \methodname~is to enable the LLM policy to make decisions over the subspace of the token space, and focus more on strategic selection. To achieve this, \methodname~leverages the semantic priors of the pre-trained base model to identify a dynamic set of \emph{promising tokens}, i.e., candidates with high likelihood that ensure syntactic correctness. At each generation step, \methodname~constructs a refined action space that restricts the RL process to optimize solely over these promising tokens via masking. By filtering out the long tail tokens, \methodname~shifts the burden of the policy from \textit{predicting the next word} to \textit{selecting the best move}, allowing the agent to prioritize logical reasoning over the syntactic maintenance.

Our contributions are summarized as follows. We introduce the concept of \emph{learning with promising tokens}, a paradigm that restricts policy optimization to a semantically valid subspace to improve learning efficiency. Our empirical analysis provides a strong empirical foundation for pruning the action space. Besides, we propose \methodname, a practical framework that implements this concept by dynamically masking irrelevant tokens during both the rollout and training phases. Furthermore, we provide a theoretical analysis demonstrating that optimizing over the constrained promising set significantly reduces the variance of the policy gradient estimator, thereby stabilizing training. Finally, extensive experiments on mathematical reasoning and coding tasks demonstrate that \methodname~achieves superior performance and sample efficiency compared to strong RL baselines, including GRPO \citep{grpo} and DAPO \citep{dapo}.

\section{Preliminary}
\label{sec:preliminary}

\textbf{RL for LLM optimization}. 
We consider a standard language modeling task in which an LLM serves as a policy $\pi_\theta$ parameterized by $\theta$. Given a context input (prompt or question) $x$, the model generates a response $y$ composed of a sequence of tokens $y = (y_1, y_2, \dots, y_T)$, where each token $y_t$ belongs to a fixed vocabulary space $\mathcal{V}$. The probability of generating the entire sequence is given by the chain rule: $\pi_\theta(y \mid x) = \prod_{t=1}^{T} \pi_\theta(y_t \mid x, y_{<t}),$ where $y_{<t}$ denotes the partial sequence generated prior to step $t$. 

To align the LLM with human preferences or specific logical rules, we formulate the generation process as a Markov Decision Process (MDP \citep{puterman2014markov}) defined by the tuple $\langle \mathcal{S}, \mathcal{A}, \mathcal{T}, \mathcal{R}, \gamma \rangle$:
\begin{itemize}
    \item State space $\mathcal{S}$: The state $s_t$ at time step $t$ consists of the prompt and the history of generated tokens, i.e., $s_t = (x, y_1, \dots, y_{t-1})$;
    \item Action space $\mathcal{A}$: the space of the tokens $\mathcal{V}$;
    \item Transition $\mathcal{T}$: $s_{t+1} = s_t \cup \{a_t\}$;
    \item Reward function $\mathcal{R}$: a score $r(s_t, a_t)$ that reflects the quality of the generated answer to the question, typically given at the end of the sequence (e.g., $r_T = R(x, y)$) based on correctness verification, with intermediate rewards being zero;
    \item Discount factor $\gamma$: We typically set $\gamma=1$ for episodic generation tasks.
\end{itemize}
The goal of RL training is to find an optimal policy $\pi_\theta$ that maximizes the expected cumulative reward over the prompt distribution $\mathcal{D}$:
$$J(\theta) = \mathbb{E}_{x \sim \mathcal{D}, y \sim \pi_\theta(\cdot|x)} [R(x, y)],$$
where $R(x, y)$ represents the reward (e.g., answer correctness) received at the end of the episode.
\section{Analyzing the Potential for Decision Making over Promising Tokens}
\label{sec:empirical_analysis}

Before introducing our method, we answer an important question: \textit{Is the subspace of promising tokens sufficient for LLMs to solve the problem?}  
Intuitively, while the vocabulary of an LLM is vast ($|\mathcal{V}|> 50k$), the tokens required for a logically correct step are often concentrated within a small subset of high-probability candidates. This is supported by the successful application of top-k sampling trick.
In this section, we conduct an empirical analysis using the Qwen series models \citep{qwen} to answer this question. We first formally define the concept of promising tokens.
\begin{definition}[\textbf{Promising Tokens}]
\label{def:promising_tokens}
\emph{
Given the current state $s_t$ and the policy distribution $\pi(\cdot|s_t)$, let $\text{rank}(v \mid s_t)$ denote the rank of a token $v \in \mathcal{V}$ when sorted in descending order of its probability. The set of promising tokens, denoted as $\mathcal{P}_t$, is defined as the top-$K$ candidates:
\begin{equation}
    \mathcal{P}_t = \left\{ v \in \mathcal{V} \mid \text{rank}(v \mid s_t) \le N \right\}.
\end{equation}
Consequently, $|\mathcal{P}_t| = K$ and $\mathcal{P}_t \subset \mathcal{V}$.}
\end{definition}
This definition constructs a focused action space, filtering out the long tail of contextually irrelevant tokens. To verify the coverage of $\mathcal{P}_t$, we analyze the token ranks of \textit{successful trajectories} from two distinct perspectives: the ground truth answer and the model's own successful answer.

\textbf{Setup.} We employ \texttt{Qwen3-8B} and \texttt{Qwen3-32B} as base models. We use three diverse datasets: GSM8K \citep{gsm8k} for mathematics, HumanEval \citep{humaneval} for coding, and AlpacaEval \citep{alpaca} for general instruction following.
For each step $t$ in a successful trajectory $y$, we compute the rank of the target token $y_t$ within the base model's predicted distribution $\pi_{\text{base}}(\cdot | y_{<t})$. We define the \textit{top-$K$ coverage rate} as the percentage of tokens in these trajectories that satisfy $y_t \in \mathcal{P}_t$.

\begin{table}[htbp]
    \centering
    \caption{Top-$K$ coverage rate (\%) of successful trajectory tokens from labeled solutions. Math, Code, and General denote GSM8K, HumanEval, and AlpacaEval dataset, respectively.}
    \label{tab:token_rank_coverage_labeled_answer}
    \resizebox{\linewidth}{!}{
    \begin{tabular}{l|ccc|ccc}
        \toprule
        \multirow{2}{*}{\textbf{Metric}} & \multicolumn{3}{c|}{\textbf{Qwen3-8B}} & \multicolumn{3}{c}{\textbf{Qwen3-32B}} \\
        & Math & Code & General & Math & Code & General \\
        \midrule
        Top-2  & 91.5 & 92.0 & 83.0 & 92.1 & 92.7 & 83.8 \\
        Top-4  & 95.3 & 95.0 & 91.0 & 96.1 & 95.3 & 91.7 \\
        Top-8 & 97.4 & 96.5 & 95.4 & 98.1 & 96.7 & 95.9 \\
        Top-16 & 98.6 & 97.2 & 97.6 & 99.0 & 97.5 & 98.0 \\
        Top-32 & 99.2 & 97.7 & 98.8 & 99.5 & 98.0 & 99.0 \\
        \bottomrule
    \end{tabular}
    }
    \vspace{-1.5em}
\end{table}

\subsection{Analysis of Ground-truth Solution}

\textbf{\colorbox{red!15}{Observation 1:} \emph{The labeled solution is effectively contained within the promising token subspace.}}

We first investigate whether the ground truth answers (labeled solutions) fall within the promising tokens predicted by the base model. This measures the \textit{feasibility} of finding the optimal policy within the promising token space $\mathcal{P}_t$.  As shown in Tab. \ref{tab:token_rank_coverage_labeled_answer}, the correct answer tokens are highly concentrated in the head of the distribution. For \texttt{Qwen3-32B}, over $99.5\%$ of the ground truth tokens fall within the Top-32 candidates for Math tasks. Even for the smaller 8B model, the Top-32 coverage consistently exceeds $97\%$ across all domains.
Besides, qualitative inspection reveals that the few outliers (i.e., tokens outside $\mathcal{P}_t$) mainly appear at the very beginning of the answer generation or consist of infrequent symbols, \textbf{typically having negligible impact on the core reasoning logic.}
This observation confirms that the optimal moves required to solve the problem are already statistically prioritized by the pre-trained model. Therefore, pruning the vocabulary to $\mathcal{P}_t$ does not impose an upper bound on performance, as the solution space remains intact.

\subsection{Analysis of Model-generated Solution}

\textbf{\colorbox{red!15}{Observation 2:} \emph{The model's intrinsic answering capabilities do not rely much on the long tail of the vocabulary.}}

While previous experiment confirms that the labeled solution is largely covered by promising tokens, one might worry that the model needs to explore creative low-probability tokens to find its own path to the solution. To address this, we sample multiple answers from the base model and analyze only those that reach the correct final answer.
Tab. \ref{tab:token_rank_coverage_model} reports the rank of tokens in these \textit{model-generated successful paths}. The results show that for Math and Code tasks, $100\%$ of the tokens in correct solutions are located within the Top-8 candidates.
This indicates that when the model successfully reasons, it rely most on the high-likelihood tokens. The long tail tokens contribute virtually nothing to correct reasoning chains. Consequently, the challenge of RL may not be expanding the search to the full vocabulary, but to learn to decide the correct move from the promising tokens.

\begin{table}[htbp]
    \centering
    \caption{Top-$K$ coverage rate (\%) of successful trajectory tokens from model-generated solutions.}
    \label{tab:token_rank_coverage_model}
    \resizebox{\linewidth}{!}{
    \begin{tabular}{l|ccc|ccc}
        \toprule
        \multirow{2}{*}{\textbf{Metric}} & \multicolumn{3}{c|}{\textbf{Qwen3-8B}} & \multicolumn{3}{c}{\textbf{Qwen3-32B}} \\
        & Math & Code & General & Math & Code & General \\
        \midrule
        Top-2  & 98.9 & 98.1 & 93.2 & 92.1 & 97.8 & 92.4 \\
        Top-4  & 99.9 & 99.6 & 98.0 & 100 & 100 & 100 \\
        Top-8  & 100 & 100 & 100 & 100 & 100 & 100 \\
        \bottomrule
    \end{tabular}
    }
\end{table}

\section{Method}
\label{sec:method}

\begin{figure*}[t]
\centering
\includegraphics[width=0.95\linewidth]{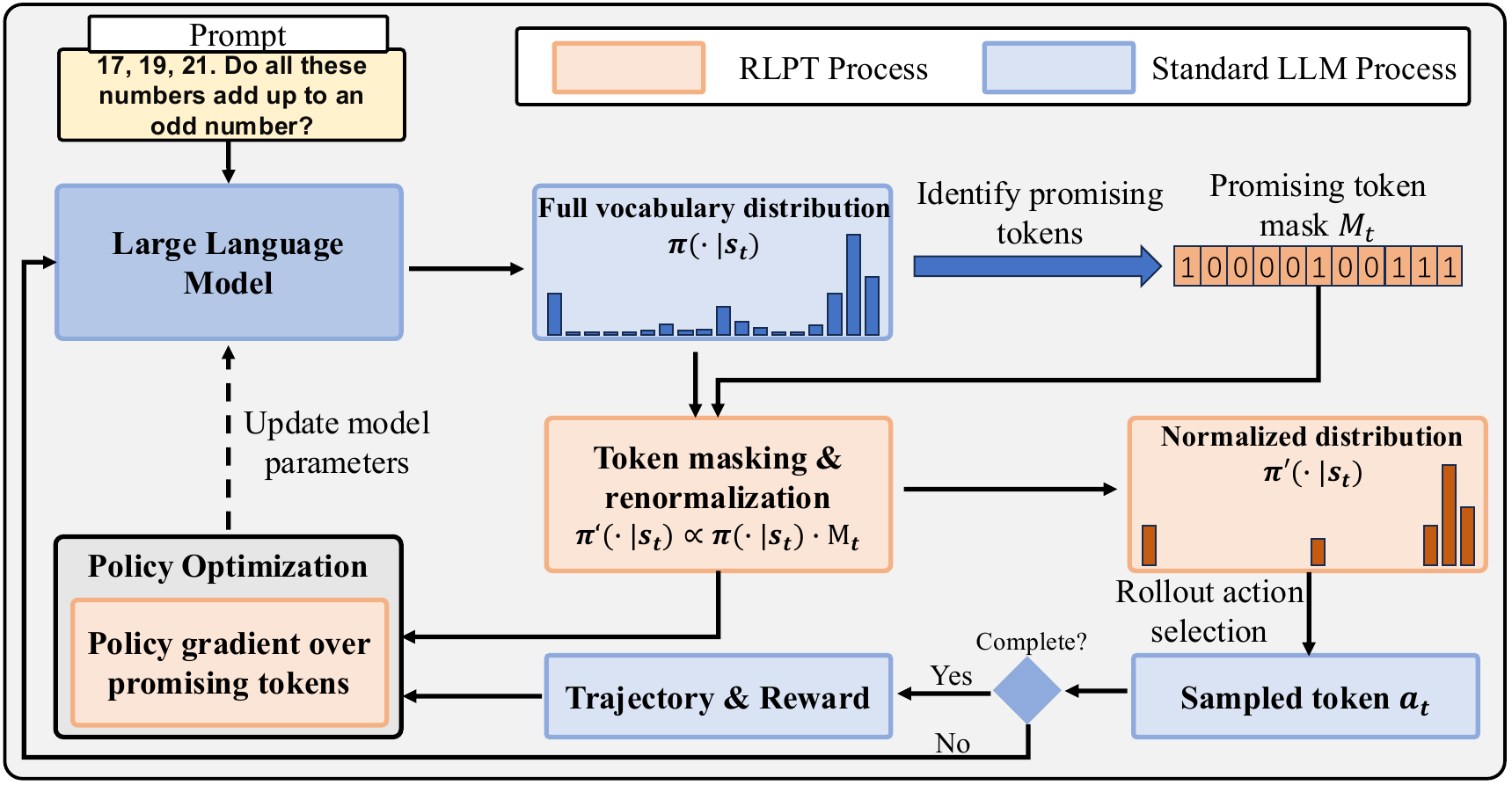}
\caption{The overall framework of \methodname~method.}
\vspace{-1em}
\label{fig:overall_framework}
\end{figure*}

In this section, we present \methodname, a framework designed to improve RL optimization for LLMs by constraining the optimization landscape to a semantically valid subspace, as shown in the overall framework in Fig. \ref{fig:overall_framework}. The core mechanism of \methodname~operates in a two-stage process: first, constructing a binary mask $M_t$ based on the semantic priors of the behavior policy; second, applying this mask to decouple the logical decision-making from the massive vocabulary space during both the sampling and training phases.

\subsection{Promising Token Construction}
\label{sec:mask_construction}

The foundation of \methodname~is the identification of \emph{promising tokens}. At each time step $t$, given the current state $s_t$, we utilize the behavior policy $\pi_{\text{old}}$ (the policy used for data collection) to compute the probability distribution over the vocabulary $\mathcal{V}$.

We formally define the \textit{Promising Mask} $M_t \in {0, 1}^{|\mathcal{V}|}$ as a binary vector that filters out contextually irrelevant tokens. Let $\mathcal{P}_t$ be the set of top-$K$ tokens determined by $\pi_{\text{old}}(\cdot|s_t)$. The elements of the mask vector $M_t$ are defined as:
\begin{equation}
M_t[v] = \mathbb{I}(v \in \mathcal{P}_t) =
\begin{cases}
1, & \text{if } v \in \mathcal{P}_t \\
0, & \text{otherwise}
\end{cases},
\end{equation}
where $\mathbb{I}(\cdot)$ is the indicator function. This mask filter for the generation ensures that subsequent operations focus exclusively on candidates that are syntactically and semantically plausible according to the base model’s priors.

\subsection{Policy Rollout and Optimization with Promising Token Masking}

\methodname~integrates the promising mask into the standard RL pipeline by modifying both the rollout (sampling) and the policy update (training) processes.

\paragraph{Policy Rollout.}
During the data generation phase, we aim to explore diverse reasoning paths while maintaining coherence. Instead of sampling from the full distribution, we sample from a masked distribution $\tilde{\pi}$. Specifically, the probability of selecting a token $a_t$ is renormalized over the promising set:
\begin{equation}
\tilde{\pi}_{\text{old}}(a_t \mid s_t) = \frac{\pi_{\text{old}}(a_t \mid s_t) \cdot M_t[a_t]}{\sum_{v \in \mathcal{V}} \pi_{\text{old}}(v \mid s_t) \cdot M_t[v]}.
\end{equation}
This operation is computationally equivalent to performing Top-$K$ sampling. Crucially, the mask vector $M_t$ alongside the trajectory could be stored for the training phase.

\paragraph{Policy Optimization.}
A key innovation of \methodname~is consistent masking during optimization. Standard RL maximizes the likelihood of actions across the entire vocabulary, introducing noise from the long tail of irrelevant logits. In contrast, \methodname~optimizes the policy $\pi_\theta$ to select the best token \textit{within the promising subspace}.

During the backward pass, we apply the pre-computed mask $M_t$ (derived from the behavior policy) to the current policy’s logits. The gradient updates are calculated using the masked probability:
\begin{equation}
\tilde{\pi}_\theta(a_t \mid s_t) = \text{Softmax}\left( \text{Logits}_\theta(s_t) + (1 - M_t) \cdot (-\infty) \right)_{a_t}.
\end{equation}
By forcing the probability mass of masked tokens to zero, we ensure that the policy is not penalized for fluctuations in the logits of irrelevant tokens.

\subsection{Integration with RL Algorithms}
\methodname~is algorithm-agnostic and can be seamlessly integrated with arbitrary RL objectives. We illustrate this using Group Relative Policy Optimization (GRPO) \citep{grpo} as a representative baseline.
Given a group of trajectories ${Traj_1, \dots, Traj_G}$ sampled with the masked policy, the objective function of \methodname-GRPO is formulated as:
\begin{equation}
\small
\begin{split}
\mathcal{J}_{\text{\methodname}}(\theta) = &\mathbb{E}_{Traj \sim \tilde{\pi}_{\text{old}}} \Bigg[ \frac{1}{T} \sum_{t=1}^T \min \Bigg(  \frac{\tilde{\pi}_\theta(a_t|s_t)}{\tilde{\pi}_{\text{old}}(a_t|s_t)} A_t, \\
& \text{clip}\left(\frac{\tilde{\pi}_\theta(a_t|s_t)}{\tilde{\pi}_{\text{old}}(a_t|s_t)}, 1-\epsilon, 1+\epsilon\right) A_t \Bigg) \Bigg],
\end{split}
\end{equation}
where $A_t$ is the advantage computed from rewards. The key difference from standard GRPO is that the probability ratio $\frac{\tilde{\pi}_\theta}{\tilde{\pi}_{\text{old}}}$ is computed strictly within the promising subspace defined by $M_t$. This ensures that the policy improvement step is perfectly aligned with the exploration step.

\subsection{Theoretical Justification}

In this subsection, we provide a theoretical analysis to justify why optimizing over a constrained promising set $\mathcal{P}_t$ leads to more stable training compared to the full vocabulary space $\mathcal{V}$. We focus on the variance of the policy gradient estimator, which is a key factor in the convergence speed and stability of RL algorithms.

Consider the gradient of the objective function $J(\theta)$ with respect to the logits $z_t \in \mathbb{R}^{|\mathcal{V}|}$ of the policy at step $t$. The standard policy gradient estimator $\hat{g}$ can be expressed as:
\begin{equation}
\hat{g} = \nabla_{z_t} \log \pi_\theta(a_t|s_t) \cdot A_t,
\end{equation}
where $A_t$ is the advantage function. For the softmax parameterization $\pi(a|s) = \frac{e^{z_a}}{\sum_{v \in \mathcal{V}} e^{z_v}}$, the gradient of the log-probability with respect to the logit vector $z$ is given by $\nabla_z \log \pi(a|s) = \mathbf{e}_a - \pi$, where $\mathbf{e}_a$ is the one-hot vector for action $a$ and $\pi$ is the probability vector over $\mathcal{V}$.

Let $\mathcal{T} = \mathcal{V} \setminus \mathcal{P}_t$ denote the set of ``tail’’ tokens that are masked out in \methodname. We analyze the variance of the gradient estimator by decomposing the contribution of the promising set $\mathcal{P}_t$ and the tail set $\mathcal{T}$.

\begin{proposition}[\textbf{Variance Reduction}]
\label{prop:variance}
Assuming the advantage $A_t$ is bounded, optimizing the policy over the constrained space $\mathcal{P}_t$ strictly reduces the variance of the gradient estimator associated with the tail tokens $\mathcal{T}$, compared to optimization over the full vocabulary $\mathcal{V}$.
\end{proposition}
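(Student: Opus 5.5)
The plan is to compute the tail block of the gradient estimator explicitly under both parameterizations and compare the resulting second moments. Write the full-vocabulary estimator as $\hat g = (\mathbf{e}_{a_t}-\pi)A_t$, and the \methodname{} estimator as $\tilde g = (\mathbf{e}_{a_t}-\tilde\pi)A_t$, where $\tilde\pi$ is the masked softmax. The key structural fact is that for any $v\in\mathcal{T}$ the masked logit is $-\infty$, so $\tilde\pi_v = 0$. Moreover, since $a_t$ is sampled from $\tilde\pi_{\text{old}}$, we have $a_t\in\mathcal{P}_t$ almost surely. Therefore every tail coordinate of $\tilde g$ vanishes identically, $[\tilde g]_v = 0$ for $v\in\mathcal{T}$, and its variance is exactly zero.

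For the full-vocabulary estimator, I restrict attention to the tail block $[\hat g]_{\mathcal{T}}$. Two cases arise for each $v\in\mathcal{T}$:
\begin{itemize}
\item if $a_t\neq v$, then $[\hat g]_v = -\pi_v A_t$;
\item if $a_t = v$, then $[\hat g]_v = (1-\pi_v)A_t$.
\end{itemize}
I would compute the trace of the covariance of this block under the sampling distribution of $a_t$ and the randomness of $A_t$. The goal is to show
\[
\operatorname{tr}\operatorname{Var}\bigl([\hat g]_{\mathcal{T}}\bigr)=\sum_{v\in\mathcal{T}}\operatorname{Var}\bigl((\mathbb{I}(a_t=v)-\pi_v)A_t\bigr) > 0 .
\]
It suffices to exhibit a single tail coordinate whose variance is positive.

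The two sampling schemes lead to different arguments for positivity.
\begin{itemize}
\item \emph{Full-vocabulary sampling.} If $a_t$ is drawn from the unmasked $\pi$, then $\mathbb{I}(a_t=v)$ is a nondegenerate Bernoulli whenever $0<\pi_v<1$. Softmax with finite logits guarantees this. The variance is then positive provided $A_t$ is not almost surely zero.
\item \emph{Top-$K$ rollout.} If rollout is already restricted to top-$K$ (the off-policy mismatch scenario of the introduction), then $[\hat g]_v = -\pi_v A_t$. Its variance is $\pi_v^2\operatorname{Var}(A_t)$, which is again strictly positive when $\pi_v>0$ and $A_t$ is nondegenerate.
\end{itemize}
In both cases I would invoke boundedness of $A_t$ only to ensure these second moments are finite, so that the variances are well-defined. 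Comparing with the zero tail variance of $\tilde g$ then gives the strict inequality.

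The main obstacle is making "strictly" honest. Strictness requires a nondegeneracy assumption: $A_t$ must have nonzero variance, or nonzero second moment if one works with second moments instead. Without it, both variances are zero. My first attempt would be to state this mild condition explicitly, or to phrase the comparison in terms of $\mathbb{E}\|\cdot\|^2$, which only needs $\mathbb{E}[A_t^2]>0$.

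A secondary subtlety is that the promising-block coordinates also change, because $\tilde\pi_{\mathcal{P}}$ is renormalized relative to $\pi_{\mathcal{P}}$. I would therefore confine the claim, as the proposition does, to the contribution associated with $\mathcal{T}$ and not attempt a total-variance comparison.
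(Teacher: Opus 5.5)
This is essentially the paper's argument: under full-vocabulary optimization each tail coordinate $(\mathbb{I}(a_t=k)-\pi_k)A_t$ has Bernoulli-driven variance ($\pi_k(1-\pi_k)A_t^2$ when $A_t$ is held fixed, as the paper does), whereas under the mask it vanishes identically, and by restricting to the tail block you match the statement exactly and avoid the paper's heuristic $\tilde\pi_i\approx\pi_i$ step. Your explicit nondegeneracy condition is a real improvement, since the paper draws strict positivity from $A_t^2\ge 0$; one caveat is that if $A_t$ is random and correlated with $a_t$, a single coordinate can have zero variance (e.g.\ $A_t=c/(1-\pi_v)$ on $\{a_t=v\}$ and $-c/\pi_v$ otherwise gives $(\mathbb{I}(a_t=v)-\pi_v)A_t\equiv c$), so the second-moment version with $\mathbb{E}[A_t^2]>0$, which you already mention, is the cleaner way to state strictness.
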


We present the proof of Prop. \ref{prop:variance} in Appendix \ref{appsec:proof}.

The theoretical analysis guarantees that \methodname~strictly reduces the variance of the gradient estimator by eliminating the stochastic fluctuations arising from the high-dimensional tail, providing a stable optimization landscape for efficient policy learning.
\section{Related Work}
\label{sec:related_work}

In this section, we review related work from the following three areas.

\subsection{Reinforcement Learning for LLM Training}
RL has became the standard method for aligning LLMs with human intent and enhancing their reasoning capabilities \citep{edco,deepseekv3,kalm}. Early works, such as RLHF \citep{rlhf}, employ the PPO algorithm \citep{ppo} to optimize policies with respect to learned reward models. DPO \citep{dpo} and its variants \citep{ipo,kto} simplify this process by implicitly optimizing the reward function, yet they fundamentally rely on the same policy gradient formulation. Beyond general alignment, RL has shown remarkable efficacy in reasoning-intensive domains. Methods like GRPO \citep{grpo} and RFT \citep{rft} leverage outcome-based supervision to improve mathematical and code reasoning. However, a common limitation across these approaches is that they typically formulate the policy optimization problem over the entire vocabulary space. This high-dimensional action space introduces gradient variance and potentially explores a vast number of contextually irrelevant tokens, making the training process sample-inefficient and unstable. This work addresses this fundamental inefficiency by redefining the optimization landscape from the full vocabulary to a refined subspace of promising candidates.

\subsection{Decoding-time Action Pruning Strategies}

Managing the vast action space of LLMs is a widely studied problem during LLM inference. Deterministic strategies, such as greedy decoding, often yield repetitive loops \citep{llm_repeat}, while stochastic methods, such as Top-$k$ sampling \citep{topk} and nucleus sampling \citep{topp}, are widely used to truncate the tail of the probability distribution. These methods effectively prune implausible tokens to ensure linguistic coherence and diversity during text generation. However, these pruning strategies are primarily used as \textit{decoding heuristics} during the rollout phase. The subsequent policy optimization step in standard RL algorithms \citep{verl,trlx} typically ignores this structure, either by calculating gradients based on the full policy distribution or by failing to theoretically align the training objective with the pruned search space. This discrepancy creates a \textit{train-inference mismatch}: the model explores within a constrained subspace but updates its parameters as if it had a global action space. We discuss this mismatch in Appendix \ref{appsec:discussion}.
Unlike these heuristic approaches, \methodname~integrates token pruning into the RL training loop, theoretically justifying optimization over a dynamic subset of tokens to reduce variance.

\subsection{Reinforcement Learning with Large Action Space}

Handling large or continuous action spaces is a longstanding challenge in RL research. Wolpertinger method \citep{wolpertinger} embeds actions in a continuous space to handle a large discrete action space. Yet, it assumes gradual change between actions. In domains such as RTS games or code generation, invalid-action masking \citep{invalid_action_making} is often used to manually filter out illegal moves based on rigid rules or syntax constraints. While effective, these methods rely on static, domain-specific rules that are difficult to scale to the open-ended semantic space of natural language. An exception is ASRE \citep{asre}, which automatically identifies sparse action to constrain its execution, but it is hard to handle the vast action space. Another line of research explores hierarchical RL (HRL) \citep{hiro,o3f}, which decomposes tasks into high-level planning and low-level execution. However, applying HRL to token-level generation often yields complex architectures that are difficult to tune. In contrast, \methodname~proposes a lightweight approach that leverages the LLM's prior knowledge to handle the large action space. 
\section{Experiment}
\label{sec:exp}

\begin{table*}[t]
\centering
\caption{Performance of various methods on diverse datasets. The values in the table represent the accuracy (\%) of the answers for different tasks. The reported results are evaluated by averaging 4 samples for each question.}
\label{tab:main_result}
\setlength{\tabcolsep}{8pt}
\resizebox{\linewidth}{!}{
\begin{tabular}{l|ccc|cc|c|c}
\toprule
\multirow{2}{*}{\textbf{Method}} & \multicolumn{3}{c}{\textbf{Math}} & \multicolumn{2}{c}{\textbf{Telecom}} & \textbf{Code} & \textbf{Average} \\
        \cmidrule(lr){2-4} \cmidrule(lr){5-6} \cmidrule(lr){7-7}
& Math-17k & AIME-24 & AIME-25 & Datacom & Wireless & OpenR1-Code \\
\midrule
No Training            & 25.2 & 16.7 & 16.7 & 55.65 & 50.87 & 40.89 & 34.34 \\  \midrule
GRPO            & 34.7 & 20.0 & \textbf{19.3} & 50.87 & 52.17 & 40.18 & 36.20 \\
\cellcolor{mygray} GRPO+\methodname            & \cellcolor{mygray}  \textbf{38.3} & \cellcolor{mygray} \textbf{23.3} & \cellcolor{mygray} 18.0 & \cellcolor{mygray} \textbf{51.30} & \cellcolor{mygray} \textbf{55.65} & \cellcolor{mygray} \textbf{40.92} & \cellcolor{mygray} \textbf{37.91} \\
\midrule
DAPO            & 36.4 & \textbf{20.7} & 17.3 & 54.43 & 49.57 & \textbf{39.87} & 36.38 \\
\cellcolor{mygray} DAPO+\methodname            & \cellcolor{mygray} \textbf{39.7} & \cellcolor{mygray} 19.3 & \cellcolor{mygray}  \textbf{20.7} & \cellcolor{mygray} \textbf{54.78} & \cellcolor{mygray} \textbf{50.43} & \cellcolor{mygray} 39.01 & \cellcolor{mygray} \textbf{37.32} \\
\bottomrule
\end{tabular}
}
\end{table*}

In this experiment, we conduct comprehensive experiments to evaluate the effectiveness of the proposed \methodname~method.
The main goal of our experiments is to answer the following key research questions:
(1) How does \methodname~compare with standard RL algorithms in terms of answer accuracy? (Sec. \ref{expsec:main_result})
(2) How does \methodname~improve the stability and quality of the RL training process? (Sec. \ref{expsec:performance_analysis})
(3) Is implicit token pruning superior to training an explicit selection module? (Sec. \ref{expsec:explict_policy})
and (4) How does the size of the promising token set affect performance? (Sec. \ref{expsec:ablation_study})
We begin by introducing the experimental setting.

\subsection{Experimental Setting}
\label{expsec:exp_setting}

\textbf{Datasets for evaluation.}
To verify the effectiveness of the proposed \methodname~method, we evaluate our approach on diverse benchmarks covering three representative domains:
\begin{itemize}[leftmargin=*]
\item \textbf{Mathematical reasoning:} We utilize Math-17k \citep{dapo}, AIME-24 and AIME-25 \citep{aime} to evaluate the model’s capacity for multi-step logical deduction. These tasks serve as a rigorous testbed for precision in token selection, where even minor stochastic deviations in early reasoning steps can lead to catastrophic error propagation.
\item \textbf{Code:} We leverage OpenR1-Code \citep{openr1} to evaluate proficiency in algorithmic synthesis. Unlike natural language, code generation necessitates navigating deterministic syntax and long-range functional dependencies, providing a benchmark for the model’s structural coherence.
\item \textbf{Telecom:} We also consider telecommunication tasks to verify broader application of \methodname~method, including Datacom and Wireless. We construct a dataset for each task comprising 12,000 question-answer pairs, synthesized from a diverse corpus of product documentation, technical solutions, and domain knowledge bases. The datasets encompass diverse question types, including single-choice, multiple-choice, and open-ended QA, covering fundamental principles, product concepts, terminology understanding, and multi-step reasoning tasks.
\end{itemize}
Appendix \ref{appsec:examples_of_dataset} shows examples of the input and output of these datasets.

\textbf{Baseline RL methods.} In our experiments, we select two representative RL methods as baselines: Group Relative Policy Optimization (\textbf{GRPO} \citep{grpo}), and Decoupled Clip and Dynamic Sampling Policy Optimization (\textbf{DAPO} \citep{dapo}). GRPO eliminates the critic model by estimating advantages through group-wise normalization of rewards. DAPO builds upon this framework by introducing dynamic sampling, which filters out prompt groups with identical rewards to ensure effective gradient updates, and decoupled clipping to mitigate entropy collapse.

\textbf{Implementation details.}
Experiments are implemented using the MindSpeed-RL framework \citep{mindspeed}. We employ Qwen3-4B and Qwen3-8B as base models.
Reward signals are provided by rule-based verifiers for Math, execution sandboxes for Code, and model-based evaluation (Deepseek-V3 \citep{deepseekv3}) for Telecom.
Unless stated otherwise, we use a promising set size of $K=4$ and report results at 200 training steps.
Computing resources includes clusters with 256 KUNPENG 920 CPUs and 8 Ascend 910B3 NPUs. Full hyperparameters are in Appendix \ref{appsec:hyper}. We refer the readers to Appendix \ref{appsec:more_details} for more details about the experimental setting.

\subsection{Main Results}
\label{expsec:main_result}

\textbf{Performance on mathematical tasks.}
Tab. \ref{tab:main_result} presents the comparative results on the Qwen3-8B backbone. \methodname~consistently outperforms baselines across all domains, with the most significant gains observed in mathematical reasoning. When applied to the GRPO baseline, \methodname~achieves a substantial improvement of 3.6\% on the Math-17k dataset (rising from 34.7\% to 38.3\%) and a 3.3\% gain on AIME-24.  While the improvements on AIME-25 are mixed, the overall trend suggests that pruning low-likelihood tokens effectively reduces the burden for maintaining the textual coherence on complex math problems.
This empirical evidence strongly validates our core hypothesis in Sec. \ref{sec:intro}: mathematical problem-solving involves searching through a vast combinatorial space of reasoning steps. Standard RL algorithms often waste sample efficiency exploring syntactically valid but logically irrelevant tokens (the ``long tail’'). By constraining the policy optimization to the \emph{promising} subspace, \methodname~effectively filters out this noise, allowing the agent to allocate its exploration budget exclusively to high-value logical decisions. 

\textbf{Generalization across domains.}
Beyond mathematics, \methodname~demonstrates robust generalization. In the Code domain (i.e., OpenR1-Code), it maintains a competitive edge, balancing the need for rigid syntax with algorithmic logic. In the Telecom domain, \methodname~improves over DAPO by \textbf{0.86\%} on Wireless tasks. These results confirm that identifying promising tokens is a domain-agnostic advantage that enhances sample efficiency regardless of the underlying knowledge domain.

\textbf{Training Efficiency.} As visualized in Fig. \ref{fig:training_curves}, \methodname~not only achieves higher asymptotic performance but also demonstrates superior sample efficiency. The reward curve rises sharply in the early stages compared to GRPO. This acceleration suggests that by removing the burden of maintaining syntactic coherence (which is already handled by the pre-trained prior), \methodname~enables the policy to “cut to the chase,” focusing immediately on optimizing reasoning trajectories.

\begin{figure}
\centering
\includegraphics[width=\linewidth]{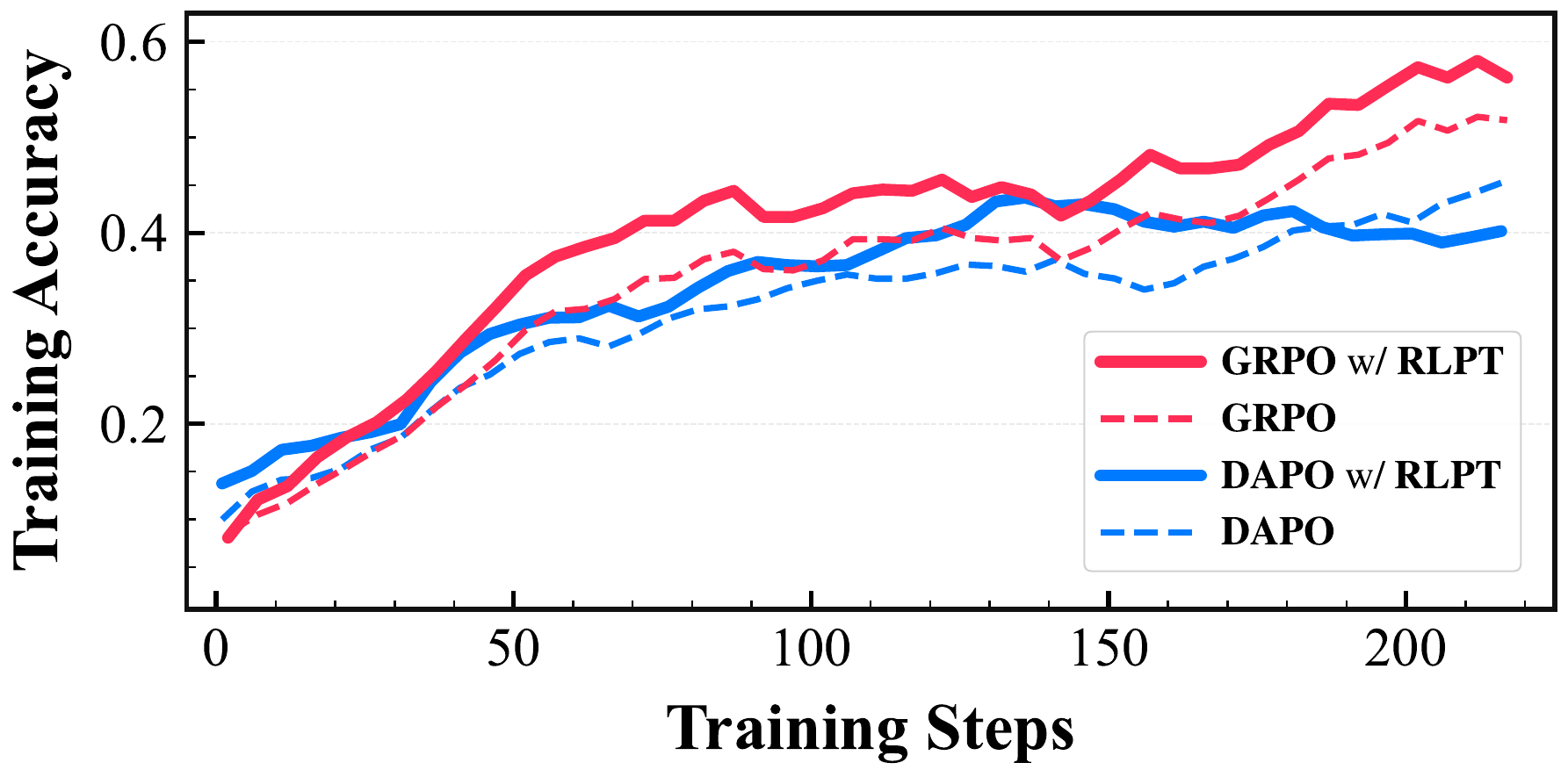}
\caption{Training curves on Math-17k dataset.}
\label{fig:training_curves}
\end{figure}

\textbf{Training with Qwen3-4B on Math-17k.}
To verify that the effectiveness of \textbf{RLPT} is not confined to a specific model capacity, we conduct an ablation study with the Qwen-4B model on the Math-17k dataset. The results, presented in Fig. \ref{fig:model_scale}, demonstrate that \methodname~consistently yields performance improvements on Qwen3-4B. As observed, \textbf{RLPT} provides a steady gain over the standard GRPO algorithm. Specifically, it achieves an absolute accuracy improvement of 1.3\% on the 4B model and a more pronounced 3.6\% on the 8B model. This consistent trend across different scales suggests that our proposed reinforcement learning strategy is robust and scales effectively with increased model capacity, further facilitating the discovery of successful reasoning trajectories.

\begin{figure}[htbp]
\centering
\includegraphics[width=\linewidth]{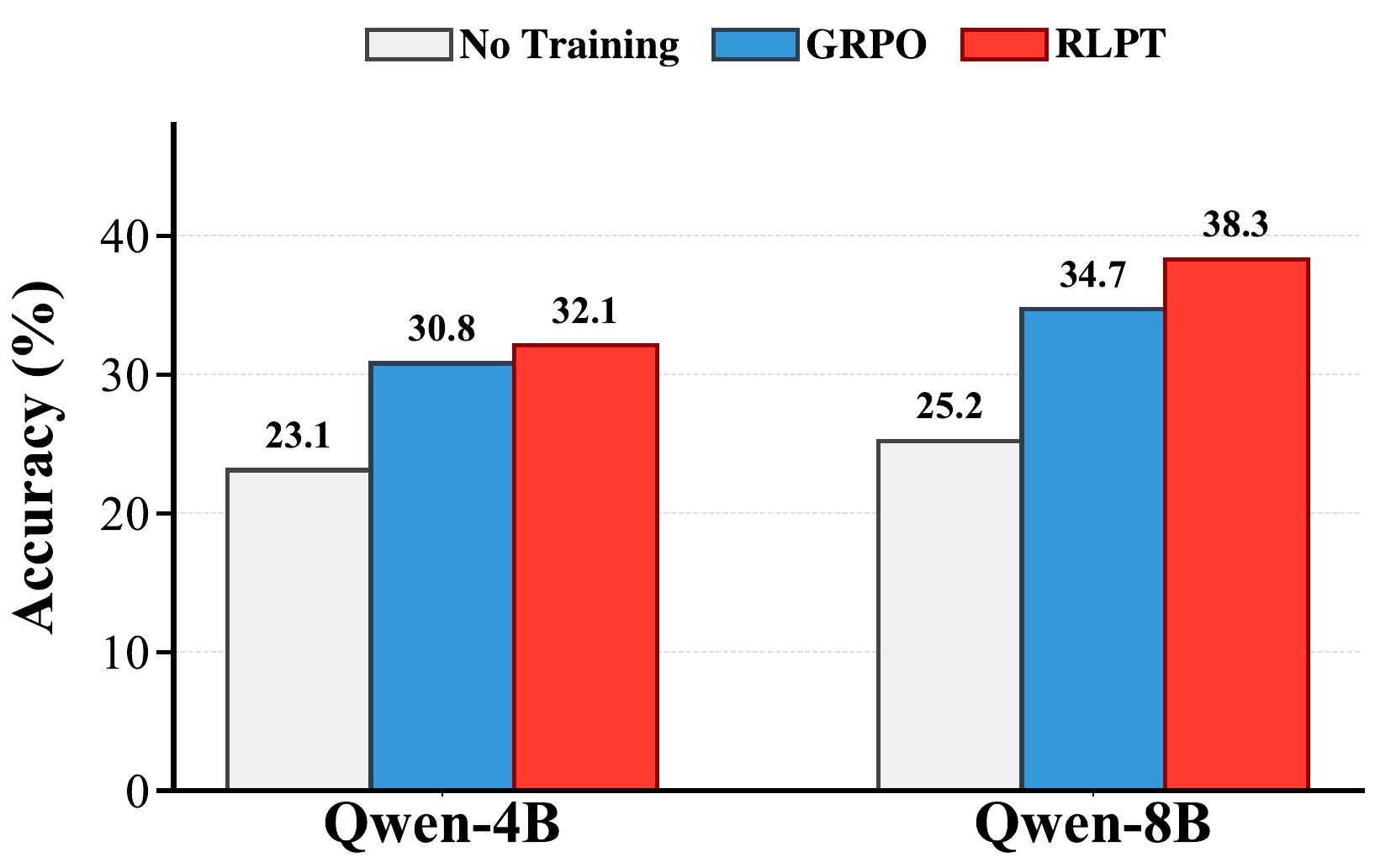}
\caption{Performance of \methodname~method on different sizes of models.}
\label{fig:model_scale}
\end{figure}

\subsection{Performance Analysis of \methodname~Method}
\label{expsec:performance_analysis}

To better understand why \methodname~improves training efficiency, we conduct a deeper analysis focusing on gradient variance and qualitative decision-making.

\textbf{Gradient norm during the training process}.
A key theoretical advantage of \methodname~is the reduction of variance in policy gradient estimation. By masking out the long tail of irrelevant tokens, the policy avoids assigning probability mass to invalid actions, thereby reducing noise in gradient updates. We visualize the gradient norm during the training of GRPO and GRPO+\methodname~in Fig. \ref{fig:grad_norm}. We observe that the gradient norm of \methodname~method is more stable and consistently lower in compared to the baseline. The baseline exhibits spikes, indicating unstable updates where the policy attempts to correct for large shifts in the probability of tail tokens. In contrast, \methodname~maintains a smoother optimization trajectory, confirming that constraining the action space acts as an effective regularizer.

\begin{figure}[t]
\centering
\includegraphics[width=\linewidth]{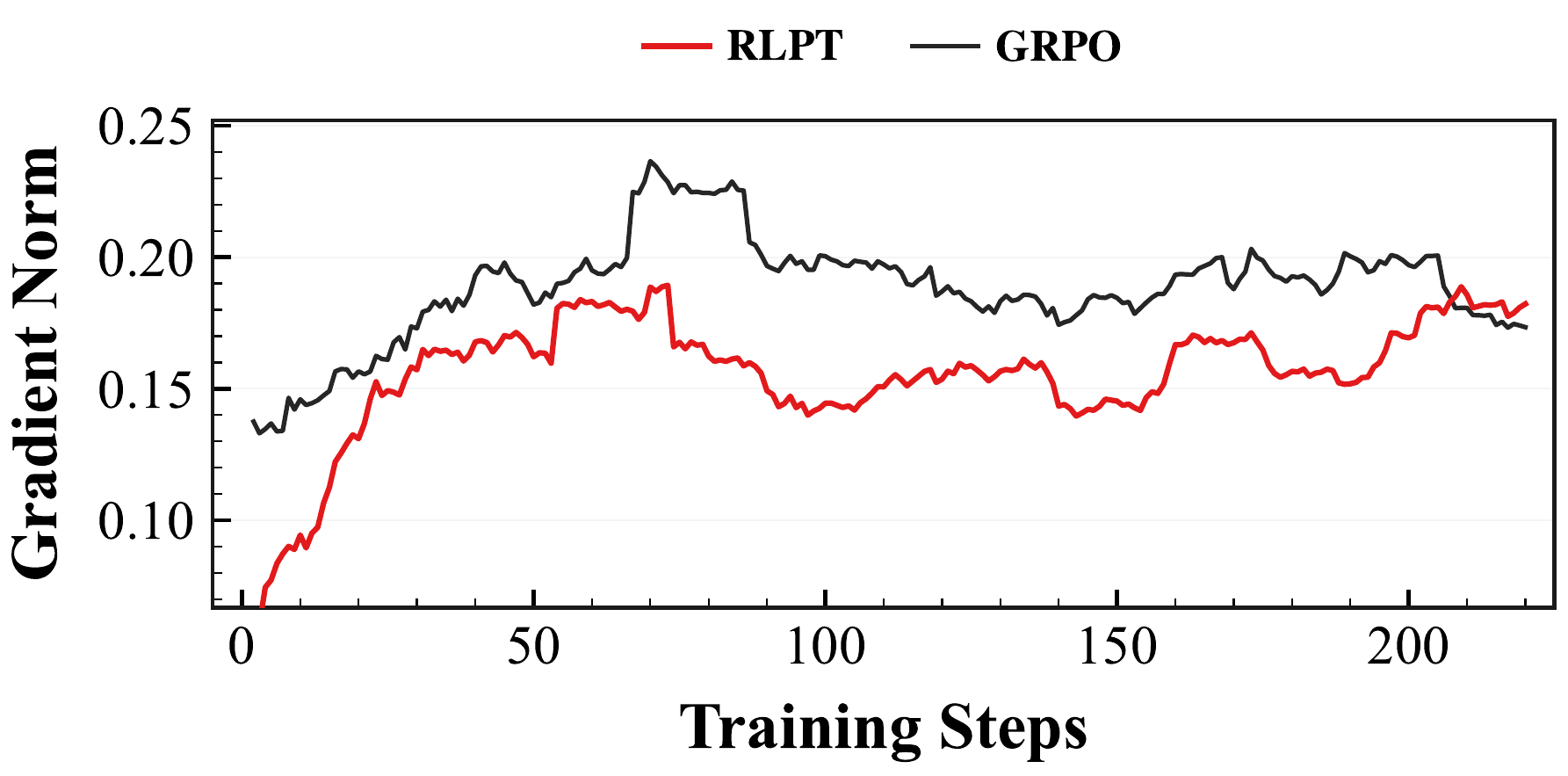}
\caption{Gradient norm curves of GRPO and GRPO+\methodname~during the training process.}
\label{fig:grad_norm}
\end{figure}

\textbf{Case study on decision making with promising token.}
To verify that the promising tokens correctly capture meaningful decision space, we examine specific inference steps from the GSM8K dataset \citep{gsm8k} using the trained model. Tab. \ref{tab:case_study} illustrates representative cases. Given a context requiring a reasoning step, the promising set contains distinct logical connectors, e.g., {so, since, but…}. Different selections could lead to diverse reasoning directions. Crucially, syntactically irrelevant words (e.g., “apple”, “dog”) are successfully filtered out. This observation confirms that \methodname~shifts the RL agent’s focus from \textit{what fits the sentence} to \textit{reasoning with the proper scenario}, effectively raising the abstraction level of the policy’s decisions.

\begin{table}[htbp]
\centering
\caption{Case study on the decision making over promising tokens.}
\begin{tabular}{p{4.5cm}|p{3cm}}
\toprule
\textbf{Contextual text} & \textbf{Promising tokens} \\   \toprule
A robe takes 2 bolts of blue fiber and half that much white fiber. How many bolts in total does it take? Answer: It takes 2/2=1 bolt of white fiber. & [Total, It, So, The, But, A, That, In] \\ \midrule
Every day, Wendi feeds each of her chickens three cups of mixed chicken feed, containing seeds, mealworms and vegetables to help keep them healthy. She gives the chickens their feed in three separate meals … 60 cups of feed per day. & [She, In, Since, Total, So, They, Thus, Each] \\
\bottomrule
\end{tabular}
\label{tab:case_study}
\end{table}

\subsection{Comparison with Explicit Selector Policy for Token Selection}
\label{expsec:explict_policy}

As the core idea of this work is to enable the policy to \textit{focus on decision making}, a natural alternative is to train an explicit, lightweight selector policy to filter tokens, rather than using our implicit probability-based masking. We implement this baseline by training a multilayer perceptron selector that takes the context embedding and candidate token embeddings as input to output the index for token selection. We pre-train this selector on ALPACA \citep{alpaca}, a general QA dataset, to learn general token relevance.

We compare this explicit selector method with \methodname~on Math-17k, using GRPO as the base optimization algorithm. The results are shown in Fig. \ref{fig:math17k_implicit}. We observe that the Explicit Selector method stops improving at a significantly lower score (\~25\%) than \methodname~(\~40\%). This failure can be attributed to the fact that the external selector is initialized from scratch, losing the general knowledge from LLM pre-training. In contrast, \methodname~leverages the \textit{intrinsic} semantic priors of the base model itself, ensuring that the promising set is always naturally aligned with the model’s capabilities.

\begin{figure}[h]
\centering
\includegraphics[width=\linewidth]{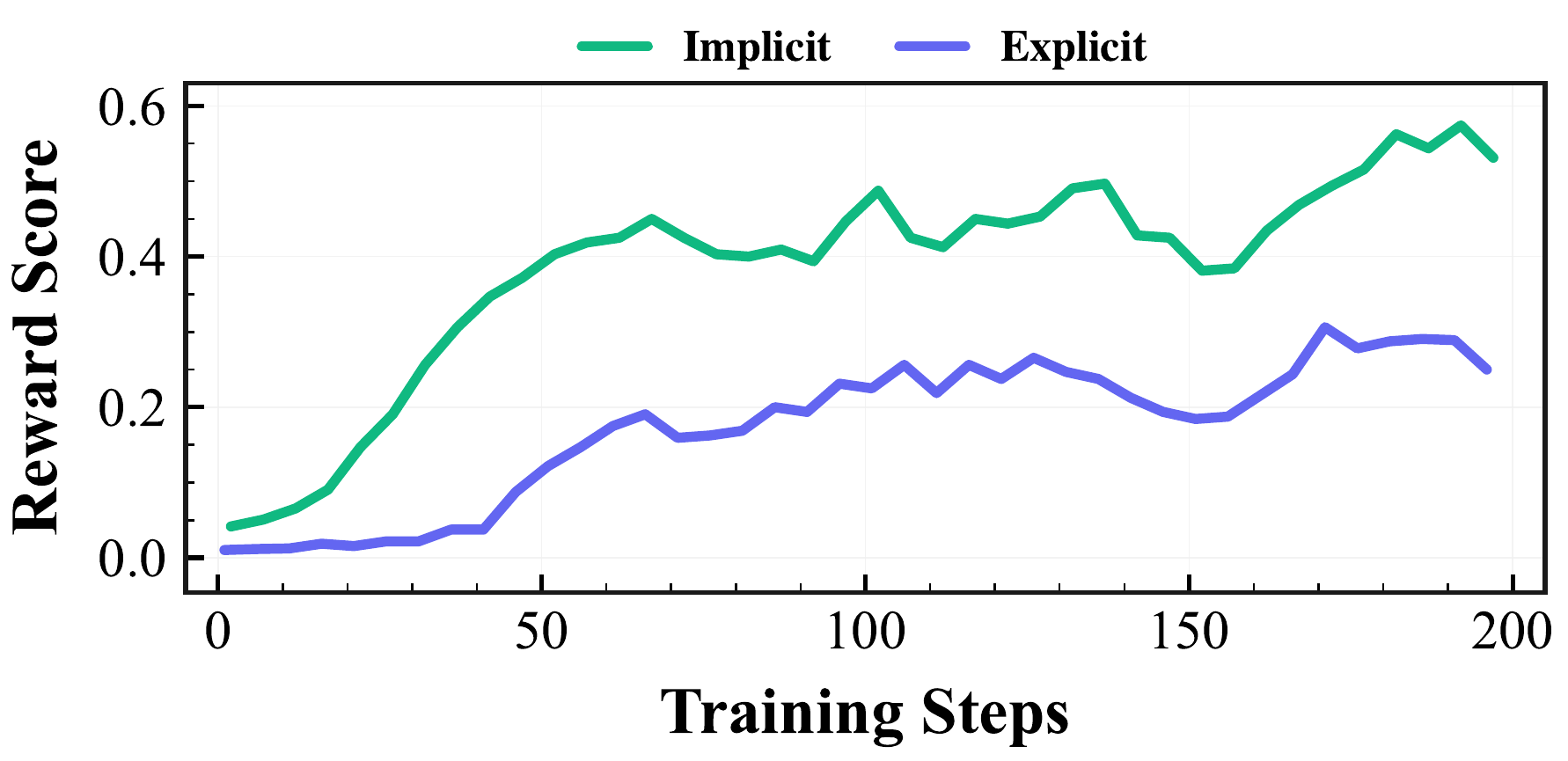}
\caption{Comparison of \methodname~training with explicit or implicit token selector.}
\label{fig:math17k_implicit}
\end{figure}

\subsection{Ablation Study}
\label{expsec:ablation_study}

\textbf{Training with different sizes of promising set.}
The hyperparameter $K$ controls the trade-off between the expressiveness of the action space and the efficiency of exploration. We evaluate $K \in {4, 8, 16, 32}$ on Math-17k, as shown in Fig. \ref{fig:ablation_k}.
Interestingly, we find that a compact set of $K=4$ yields the best asymptotic performance.
Increasing $K$ to 32 results in a noticeable performance drop, regressing towards the baseline. This supports our premise that the “true” reasoning path typically lies within the top few candidates. Expanding $K$ unnecessarily reintroduces the noise of the large vocabulary, diluting the RL signal. Therefore, a tight constraint ($K=4 \sim 8$) strikes the optimal balance: it is large enough to include the correct reasoning step (i.e., high recall) but small enough to enforce focused decision-making (i.e., high signal-to-noise ratio).

\begin{figure}[t]
\centering
\includegraphics[width=\linewidth]{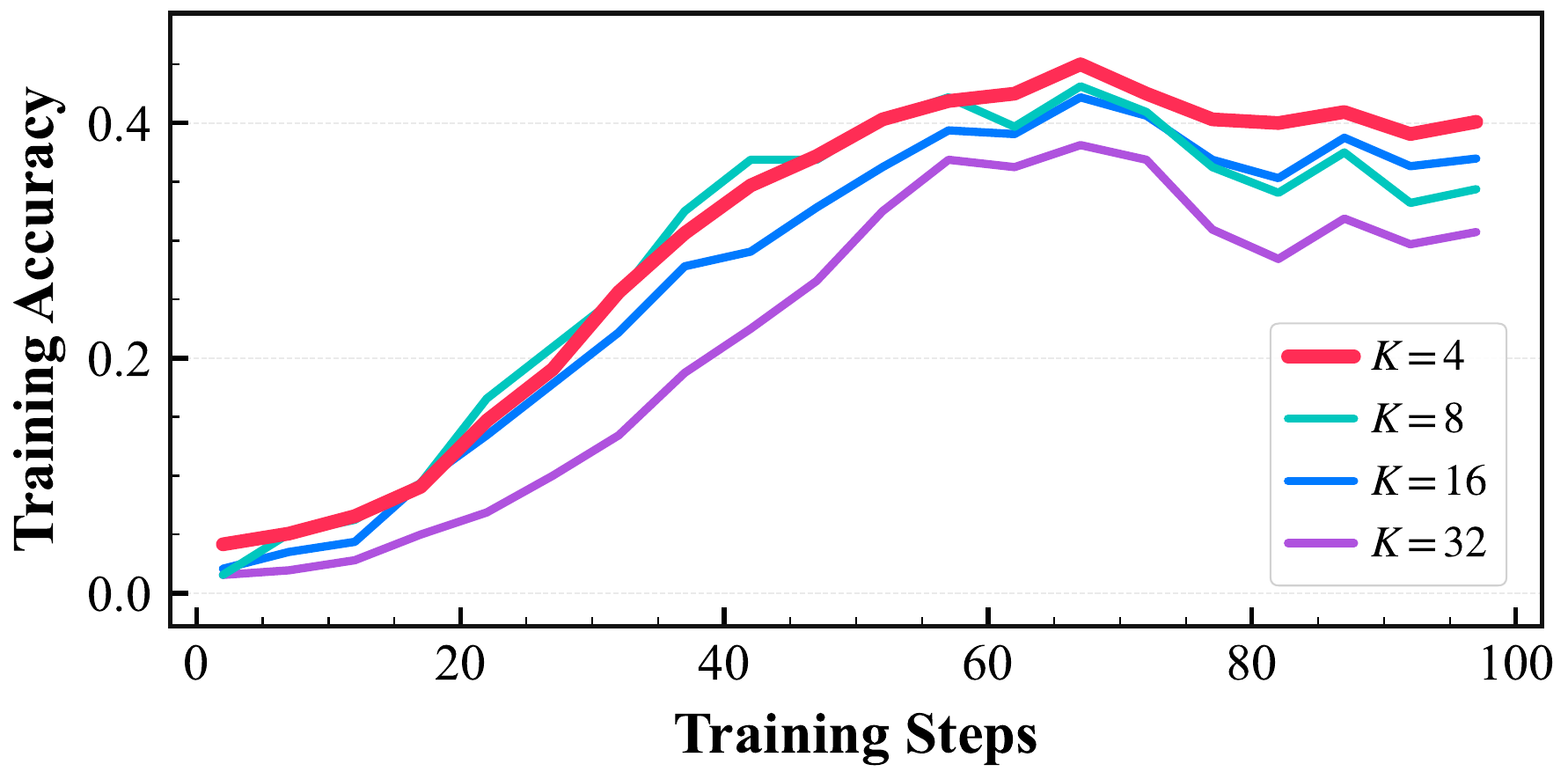}
\caption{Ablation study on the size of promising set (K).}
\label{fig:ablation_k}
\end{figure}
\section{Conclusion \& Future Work}
\label{sec:conclusion}
In this work, we proposed \methodname, a RL framework that decouples strategic decision-making from basic token generation. By restricting the RL policy to a dynamic set of promising tokens identified by the base model's semantic priors, we effectively transform the high-dimensional vocabulary space into a manageable decision space. Our theoretical analysis and empirical evaluations on mathematical reasoning and coding tasks demonstrate that RL with promising tokens clearly reduces gradient variance and improves sample efficiency.

Despite the improvements in training stability and performance, there are still some limitations. Currently, \methodname~relies on a simple top-$k$ ranking to define the promising token set. While effective, fixed-size subsets may be sub-optimal. Future work could explore more adaptive thresholding methods such as top-$p$ sampling or min-$p$ sampling \citep{minp}, which might better capture the dynamic nature of token uncertainty across different reasoning steps. Restricting the action space carries the inherent risk of excluding critical decision tokens from the candidate set. While our empirical analysis shows that successful trajectories are over 99\% covered by the top-32 tokens, the missing $<1\%$ could be necessary for complex reasoning or tasks with complicated symbols. We hope that future research will explore these intriguing questions and contribute to developing LLMs that could learn more effectively.

\bibliography{reference}
\bibliographystyle{icml2026}

\newpage
\appendix

\renewcommand{\thepart}{}
\renewcommand{\partname}{}

\onecolumn
\begin{center}
	\part{\huge{\textbf{Appendix}}} %
\end{center}
\parttoc %

\section{Discussion}
\label{appsec:discussion}

\subsection{The Off-policy Issue for LLM Optimization with Standard RL}

we discuss a theoretical inconsistency prevalent in standard RL baselines regarding the mismatch between the behavior policy (used for rollout) and the target policy (used for optimization).Standard RL approaches for LLMs (e.g., PPO) typically employ decoding strategies such as Top-$k$ or Nucleus (Top-$p$) sampling during the data collection (rollout) phase to prevent the generation of incoherent text from the long tail. Let $\pi_\theta$ denote the parameterized policy over the full vocabulary $\mathcal{V}$. The actual behavior policy $\pi_{b}$ during rollout is a truncated version:
$$\pi_{b}(a_t|s_t) = \text{Truncate}(\pi_\theta(a_t|s_t); k, p),$$
where the support of $\pi_{b}$ is strictly smaller than $\mathcal{V}$. However, during the optimization phase, standard baselines typically calculate policy gradients (and KL-divergence penalties) based on the original distribution $\pi_\theta$ over the full vocabulary. 

This discrepancy turns standard baselines into an unacknowledged off-policy setting with biased estimators. In contrast, \methodname~resolves this inconsistency by integrating the token masking directly into the policy definition. By optimizing over the same masked subspace used for generation, we ensure that the behavior policy and the optimization policy are mathematically consistent, adhering to the on-policy assumption.

\section{Omitted Proof}

\subsection{Proof of Proposition \ref{prop:variance}}
\label{appsec:proof}

\textbf{Proposition \ref{prop:variance} (Variance Reduction)}
\textit{Assuming the advantage $A_t$ is bounded, optimizing the policy over the constrained space $\mathcal{P}_t$ strictly reduces the variance of the gradient estimator associated with the tail tokens $\mathcal{T}$, compared to optimization over the full vocabulary $\mathcal{V}$.}

\begin{proof}
First, we calculate the variance of the gradient component for a single token $i$. Since $a_t \sim \pi(\cdot|s_t)$, the term $\mathbb{I}(a_t=i)$ is a Bernoulli variable with parameter $\pi_i = \pi(i|s_t)$. The variance of $\hat{g}_i$ is:
\begin{equation}
    \text{Var}(\hat{g}_i) = \text{Var}\left( (\mathbb{I}(a_t=i) - \pi_i) A_t \right) = \pi_i (1 - \pi_i) A_t^2.
\end{equation}

For the standard optimization over the full vocabulary $\mathcal{V}$, the total variance sums over all tokens:
\begin{equation}
    \mathbb{V}(\hat{g}_{\text{full}}) = \sum_{i \in \mathcal{V}} \pi_i (1 - \pi_i) A_t^2 = \sum_{i \in \mathcal{P}_t} \pi_i (1 - \pi_i) A_t^2 + \sum_{k \in \mathcal{T}_t} \pi_k (1 - \pi_k) A_t^2.
\end{equation}

In \methodname, the policy is renormalized to $\tilde{\pi}$ over $\mathcal{P}_t$, and the logits for tail tokens $k \in \mathcal{T}_t$ are masked (i.e., gradients are deterministically zero). Thus, $\text{Var}(\hat{g}_{\text{mask}, k}) = 0$ for all $k \in \mathcal{T}_t$. The total variance becomes:
\begin{equation}
    \mathbb{V}(\hat{g}_{\text{mask}}) = \sum_{i \in \mathcal{P}_t} \tilde{\pi}_i (1 - \tilde{\pi}_i) A_t^2.
\end{equation}
Since the tail probability mass is negligible (as empirically verified), we have $\tilde{\pi}_i \approx \pi_i$ for $i \in \mathcal{P}_t$. The reduction in variance is dominated by the removal of the tail sum:
\begin{equation}
    \Delta \mathbb{V} = \mathbb{V}(\hat{g}_{\text{full}}) - \mathbb{V}(\hat{g}_{\text{mask}}) \approx \sum_{k \in \mathcal{T}_t} \pi_k (1 - \pi_k) A_t^2.
\end{equation}
Since $\pi_k > 0$ for tokens in the tail (due to the Softmax properties) and $A_t^2 \ge 0$, the term $\sum_{k \in \mathcal{T}_t} \pi_k (1 - \pi_k) A_t^2$ represents a strictly positive quantity.
This explicitly shows that standard RL optimization suffers from noise injection caused by the gradients of thousands of irrelevant tail tokens, which \methodname~effectively eliminates.
\end{proof}

\section{More Details about Experiment Settings}
\label{appsec:more_details}

\subsection{Examples of the Datasets in Our Experiments}
\label{appsec:examples_of_dataset}

Tab. \ref{appsec:examples_of_dataset} presents the examples from the datasets used for training and evaluation in our experiments.

\begin{table}[h]
    \centering
    \renewcommand{\arraystretch}{1.5} 
    \begin{tabular}{c|p{7.5cm}|p{3cm}}
    \toprule
    \textbf{Dataset} & \centering \textbf{Example Question} & \textbf{Target Answer} \\
    \midrule
    
    Math17k & ``In triangle $ABC$, $\sin \angle A = \frac{4}{5}$ and $\angle A < 90^\circ$. Let $D$ be a point outside triangle $ABC$ such that $\angle BAD = \angle DAC$ and $\angle BDC = 90^\circ$. Suppose that $AD = 1$ and that $\frac{BD}{CD} = \frac{3}{2}$. If $AB + AC$ can be expressed in the form $\frac{a\sqrt{b}}{c} \dots$'' & ``The answer is 34.'' \\ \midrule
    
    AIME24 & ``Every morning Aya goes for a $9$-km-long walk and stops at a coffee shop. When she walks at $s$ km/h, the walk takes 4 hours, including $t$ minutes in the shop. When she walks $s+2$ km/h, it takes 2h 24m, including $t$ minutes. If she walks at $s+\frac{1}{2}$ km/h, find the total minutes \dots'' & ``The answer is 204.'' \\ \midrule

    AIME25 & ``In $\triangle ABC$, $D, E \in \overline{AB}$ and $F, G \in \overline{AC}$. Given $AD=4, DE=16, EB=8$ and $AF=13, FG=52, GC=26$. Let $M$ be the reflection of $D$ through $F$, and $N$ be the reflection of $G$ through $E$. If Area($DEGF$) = 288, find the area of heptagon $AFNBCEM$.'' & ``The answer is 588.'' \\ \midrule

    OpenR1-Code & ``Polycarp has $n$ different binary words. He wants to reverse minimal number of words so that the final set can be arranged in a game sequence where each word starts with the previous word's last character \dots'' & ``\texttt{if zo > oz: (zo-oz)//2 \dots else: (oz-zo)//2 \dots}'' \\ 
    
    \bottomrule
    \end{tabular}
    \caption{Examples from the mathematical reasoning datasets.}
    \label{tab:math_examples}
\end{table}

\begin{table}[h]
    \centering
    \begin{tabular}{c|p{5.2cm}|p{4.5cm}}
   \toprule
   \textbf{Domain} & \centering \textbf{Example question} & \textbf{Target answer} \\
   \midrule
   Wireless-training & ``What should I do if the DSP GTPPATH command output shows that the GTP path is in the DEETECT state when the alarm is generated?" & "\dots1.First, check whether the peer GSN address specified in the alarm information is valid\dots;2.Next, execute the PING command to check if the link is normal\dots;3.Confirm whether the peer GSN can respond to ECHO messages\dots'' \\ \midrule
   Wireless-testing & ``During the deployment and operation of the LMT, the LMT is forcibly connected in HTTPS or WSS mode to ensure secure connection. In this mode, digital certificates are required for authentication. In addition, \dots In such a deployment scenario, if the MAE of a carrier is set to HTTP login mode and the LMT is set to forcible HTTPS connection mode, what will happen? A. \dots B. If the LMT connection mode is set to Force HTTPS, MAE proxy login fails to access the LMT due to protocol mismatch. The connection cannot be established even if the OM channel is normal. C. \dots D. \dots" & ``The answer is X.'' \\ \midrule
    Datacom-training & ``When configuring the Segment VXLAN feature, how can you enable EVPN as the VXLAN control plane on Transit Leaf1 and Transit Leaf2, and configure BGP EVPN peer relationships?" & "\dots1.Enter the BGP view or BGP multi-instance view\dots;2.Enter the BGP-EVPN address family view\dots;3.Configure the split group for BGP EVPN peers (groups)\dots;4.Enable the function to mark routes received from BGP EVPN peers as re-originated\dots'' \\ \midrule
    Datacom-testing & ``\dots The device supports creating subinterfaces on Layer 2 Ethernet and Layer 2 Eth-Trunk interfaces for VLAN termination to achieve inter-VLAN forwarding. However, the USG9500 series devices do not support creating subinterfaces on these two types of interfaces." & ``The answer is error'' \\ 
   \bottomrule
\end{tabular}
    \caption{Examples from the datasets used in our experiments.}
    \label{tab:example_of_dataset}
\end{table}

\subsection{Prompts Used in Our Experiments}
\label{appsec:prompts}

Tab. \ref{tab:prompts} presents the prompts used in our exeriments.

\begin{table}[h]
    \centering
    \caption{Prompts for Experiments.}
    \begin{tabular}{c|p{7.5cm}}
    \toprule
    \textbf{Task Name} & \textbf{Prompt}  \\   \toprule
    Math & Let's think step by step. Output the final answer within \verb|\boxed{}|, e.g., \verb|\boxed{5}|. \\ \midrule
    Code & Solve the following coding problem using the programming language python: \verb|{problems}| \\ \midrule
    Telecom & For the following multiple-choice question, there is only one correct answer. Please analyze the question and the options, and place the correct option ID within \verb|\boxed{}|, e.g., \verb|\boxed{A}|. \\
    \bottomrule
    \end{tabular}
    \label{tab:prompts}
\end{table}

\subsection{Hyperparameters}
\label{appsec:hyper}

The hyper-parameters for implementing \methodname~and experiments are presented in Tab. \ref{tab:hyper-parameters}. When implementing baseline methods, we use the same hyper-parameters as \methodname.

\begin{table}[htbp]
    \centering
    \caption{Hyper-parameters for training \methodname~and baselines.}
    \begin{tabular}{l|l}
    \toprule
    \textbf{Hyper-parameters} & \textbf{Value}  \\   \toprule
    Batch size & $8$ \\
    Learning rate & $10^{-6}$ \\
    Learning rate decay style & $cosine$ \\
    Train iteration & $200$ \\
    Sequence length & $4096$ \\
    RL $\gamma$ & $1$ \\
    RL $\lambda$ & $0.95$ \\
    Mini batch size & $4$ \\
    RL clip ratio & $0.2$ \\
    Promising set size & $4$ \\
    Entropy coefficient & $0.0$ \\
    KL coefficient & $0.01$ \\
    Rollout count & $8$ \\
    \bottomrule
    \end{tabular}
    \label{tab:hyper-parameters}
\end{table}

\end{document}